\newcommand{\todo}[2][]{\ifthenelse{\boolean{showTodos}}{\optionalTextFormat{Todo}{#1}{#2}}{}}
\newcommand{\remark}[2][]{\ifthenelse{\boolean{showRemarks}}{\optionalTextFormat{Remark}{#1}{#2}}{}}
\newcommand{\exclude}[2][]{\ifthenelse{\boolean{showExclude}}{\optionalTextFormat{Exclude}{#1}{#2}}{}}
\newcommand{\alternative}[3]{
  \ifthenelse{\boolean{#1}}{
    [\textit{Start of Alternative}] #2
    
    \optionalTextFormat{Alternative Text}{#3}
  }{#2}
}
\newcommand{\optionalTextFormat}[3]{[\textbf{#1}#2{:~} \textit{#3}]}
\newcommand{\refeq}[1]{(\ref{#1})}
\theoremstyle{plain}
\newtheorem{theorem}{Theorem}
\newtheorem{lemma}[theorem]{Lemma}
\theoremstyle{definition}
\newcommand{\DeclareMathVarIndex}[2]{\DeclareMathOperator{#1}{#2}}
\renewcommand{\vec}[1]{\mathbf{#1}}
\newcommand{\mat}[1]{\mathbf{#1}}
\newcommand{\fun}[1]{\mathcal{#1}}
\newcommand{\set}[1]{\mathcal{#1}}
\newcommand{\op}[1]{\ensuremath{\mathcal{#1}}}
\newcommand{\DeclVec}[3]{\ensuremath{\vec{#1} \in #2^{#3}}}
\DeclareMathVarIndex{\new}{new}
\DeclareMathVarIndex{\old}{old}
\DeclareMathVarIndex{\opt}{*}
\newcommand{\Exp}[2]{\mathbb{E}_{#1}\left[#2\right]}
\newcommand{\dd}{\text{d}}
\newcommand{\grad}{\nabla}
\newcommand{\trans}[1]{#1^{T}}
\newcommand{\inv}[1]{#1^{-1}}
\newcommand{\norm}[2][]{\|#2\|_{#1}}
\DeclareMathVarIndex{\MAP}{MAP}
\newcommand{\Real}{\mathbb{R}}
\newcommand{\x}{\ensuremath{\vec{x}}}
\renewcommand{\u}{\ensuremath{\vec{u}}}
\newcommand{\F}{\ensuremath{f}}
\newcommand{\B}{\ensuremath{\mat{B}}}
\newcommand{\Q}{\ensuremath{\mat{Q}}}
\renewcommand{\H}{\ensuremath{\mat{H}}}
\newcommand{\Copt}{\ensuremath{\C^{\opt}}}
\newcommand{\C}{\ensuremath{J}}  
\newcommand{\Ct}{\ensuremath{C}} 
\newcommand{\CT}{\ensuremath{C_{\bullet}}} 
\newcommand{\Cskill}{\ensuremath{\Ct_{skill}}}
\newcommand{\Ctask}{\ensuremath{\Ct_{task}}}
\newcommand{\expC}{\ensuremath{\Phi}}
\newcommand{\nullpi}{\ensuremath{{\pi^{0}}}}
\newcommand{\bPsi}{\ensuremath{\bar{\Psi}}}
\newcommand{\optpi}{\ensuremath{\pi^*}}
\newcommand{\kernel}[1]{\ensuremath{#1}}
\newcommand{\rkhs}[2]{\ensuremath{\mathcal{#1}^{\kernel{#2}}}}
\newcommand{\gram}[2]{\vec{g}^{\kernel{#1}}_{\set{#2}}}
\newcommand{\Gram}[2]{\vec{G}^{\kernel{#1}}_{\set{#2}}}
\newcommand{\ka}{\kernel{k}}
\newcommand{\kb}{\kernel{l}}
\newcommand{\kc}{\kernel{w}}
\newcommand{\Ha}{\rkhs{H}{k}}
\newcommand{\Hb}{\rkhs{H}{l}}
\newcommand{\Hc}{\rkhs{H}{w}}
\newcommand{\kpsi}{\psi}
\newcommand{\kphi}{\phi}
\newcommand{\HPsi}{\rkhs{H}{\psi}}
\newcommand{\HePsi}{\rkhs{H}{\psi'}}
\newcommand{\HPhi}{\rkhs{H}{\phi}}
\newcommand{\Xu}{\ensuremath{X^\pi}}
\newcommand{\Xz}{\ensuremath{X^0}}
\newcommand{\Xnu}{\ensuremath{X^\nu}}
\newcommand{\bX}{\ensuremath{X}}
\newcommand{\opE}[1]{\op{E}^{#1}}
\newcommand{\opU}[1]{\op{U}^{#1}}
\newcommand{\opC}[1]{\op{C}^{#1}}
\newcommand{\OpE}[2]{\op{E}^{#1}\left[#2\right]}
\newcommand{\OpU}[2]{\op{U}^{#1}\left[#2\right]}
\newcommand{\OpR}[2]{\op{R}_{#1}\left[#2\right]}
\newcommand{\ophU}[2]{\hat{\op{U}}^{#1}_{\set{#2}}}
\newcommand{\ophC}[2]{\hat{\op{C}}^{#1}_{#2}}
\newcommand{\OphU}[3]{\hat{\op{U}}^{#1}_{\set{#2}}\left[#3\right]}
\newcommand{\W}{\mat{W}}
\newcommand{\dimX}{\ensuremath{d_x}}
\newcommand{\dimU}{\ensuremath{d_u}}
\newcommand{\z}{\ensuremath{z}}
\newcommand{\y}{\ensuremath{y}}
\newcommand{\Z}{\ensuremath{Z}}
\newcommand{\Y}{\ensuremath{Y}}
\newcommand{\setZ}{\ensuremath{\set{Z}}}
\newcommand{\setY}{\ensuremath{\set{Y}}}
\newcommand{\tX}{\ensuremath{\tilde{X}}}
\title{Path Integral Control by Reproducing Kernel Hilbert Space Embedding}
\author{
Konrad Rawlik \\
School of Informatics\\
University of Edinburgh\\
Edinburgh, UK\\
\And
Marc Toussaint\\
FU Berlin\\
Berlin, Germany\\
\And
Sethu Vijayakumar\\
School of Informatics\\
University of Edinburgh\\
Edinburgh, UK\\
}
\begin{document}
\maketitle
\begin{abstract}
We present an embedding of stochastic optimal control problems, of the so called path integral form, into reproducing kernel Hilbert spaces. Using consistent, sample based estimates of the embedding leads to a model free, non-parametric approach for calculation of an approximate solution to the control problem. This formulation admits a decomposition of the problem into an invariant and task dependent component. Consequently, we make much more efficient use of the sample data compared to previous sample based approaches in this domain, e.g., by allowing sample re-use across tasks. Numerical examples on test problems, which illustrate the sample efficiency, are provided.
\end{abstract}

\section{Introduction}
While solving general non-linear stochastic optimal control and Reinforcement Learning problems remains challenging, some recent work \cite{Kappen:PI} has identified a class of problems that admit closed form solutions. Although these solutions require evaluation of a path integral -- equivalent to evaluation of a partition function, which in itself is a hard problem -- they allow for the application of Monte Carlo and Variational methods, leading to  several practical applications, e.g., \cite{Theodorou:PIPI, Kappen:Constrained}. In the special case of linear dynamics and quadratic costs, the required path integral can be evaluated analytically based on linear operators acting on state vectors. Here, we show that, analogously, a suitable embedding of the path integral into a \emph{reproducing kernel Hilbert space} (RKHS) allows it's evaluation in terms of covariance operators acting on elements of the Hilbert space. While this in itself does not yield a tractable solution to the SOC problem, consistent estimators of the required operators give rise to efficient non-parametric algorithms.

The change of perspective from the direct estimation of the path integral (which previous applications of Monte Carlo methods aimed at) to estimation of operators allows to overcome several shortcomings of previous methods while maintaining many of their advantages. Most importantly, it can significantly reduce the sample complexity by splitting the problem appropriately into an invariant and task varying component, allowing efficient sample re-use across tasks and leading to a form of transfer learning -- contrast this to the situation where any change in the task including, for e.g., different start states, necessitate acquiring new samples \cite{Theodorou:PIPI,Theodorou:PI:RBDyn}. Additionally, the approach remains model free, allowing it's application to the Reinforcement Learning setting. This is in contrast to variational \cite{Kappen:EP} or function approximation \cite{Todorov:Aprox:Agr, Todorov:Aprox:LS} approaches, from which it is further distinguished through convergence guarantees. The RKHS embedding make the operators state-dimensionality independent, leading to better scalability, while prior knowledge about both tasks and dynamics can be effectively incorporated by informing choices of sampling procedures and kernel.

It is worth noting that, while we choose to frame our approach in the context of path integral stochastic optimal control, it is not restricted to problems which fall into this class. The formalisms of linearly solvable MDPs \cite{Todorov:MDP}, inference control \cite{Toussaint:AICO} and free energy control \cite{Friston} all require solving an underlying problem of equivalent form, making the methods proposed directly applicable in these contexts. Furthermore \cite{Rawlik:RSS} discusses a formulation which generalizes path integral control, to derive an optimal policy for general SOC problems. Finally, while we focus on finite horizon problems, path integral formulations for discounted and average cost infinite horizon problems \cite{Todorov:PNAS}, as well as risk sensitive control \cite{Kappen:RiskPI} also exist.

\section{Path Integral Control}
In this section we briefly review the path integral approach to stochastic optimal control \cite{Kappen:PI}, for a more detailed treatment, see \cite{Kappen:PI:Chapter, Theodorou:PIPI}.
Let $\DeclVec{\x}{\Real}{\dimX}$ be the system state and $\DeclVec{\u}{\Real}{\dimU}$ the control signals. Consider a continuous time stochastic system of the form
\begin{equation}
\label{eq:dyn}
 \dd\x = \F(\x,t)\dd t + \B(\x,t)(\u \dd t + \dd\xi)~,
\end{equation}
where $\dd\xi$ is a multivariate Wiener process with $\Exp{}{d\xi^2} = \Q(\x,t)dt$, and $\F$, $\B$ and $\Q$ may be non-linear functions. In particular note that the system is affine in the controls and both noise and controls act in the same subspace. We seek the best Markov policy, i.e., $\u(t) = \pi(\x(t),t)$, with respect to an objective of the form
\begin{equation}
 \label{eq:cost}
   \C^\pi(\x, t) = \Exp{\path{\Xu}{t}{T}|\x}{\CT(\Xu(T)) + \int_{t}^T \Ct(\Xu(s), s) + \trans{\u(s)}\H \u(s) \dd s}~,
\end{equation}
where $T$ is some given terminal time and the expectation is taken w.r.t. to path of \refeq{eq:dyn} starting in $\x$ and following policy $\pi$. The control cost is further constrained by requiring it to satisfy $\mat{Q} = \lambda\mat{B}\inv{\H}\trans{\mat{B}}$ for some constant scalar $\lambda > 0$.

It can be shown that for problems of this form the optimised objective can be expressed as
\begin{equation}
  \label{eq:Copt}
  \Copt(\x,t) = \min_\pi \C^\pi(\x,t) = -\lambda\log\Psi(\x,t)~,
\end{equation}
where $\Psi$ is given by the path integral
\begin{equation}
 \label{eq:pi}
 \Psi(\x,t) = \Exp{\path{\Xz}{t}{T}|\x}{e^{-\int_{t}^T \frac{1}{\lambda}\Ct(\Xz(s),s)\dd s}\Psi(\Xz(T),T)}~,
\end{equation}
with $\Psi(\cdot,T) = \exp\{-\CT(\cdot)/\lambda\}$. The expectation in \refeq{eq:pi} is taken w.r.t. uncontrolled path of the dynamics \refeq{eq:dyn}, i.e. those under the policy $\nullpi(\cdot,\cdot) = 0$, starting in $\x_t$.

As a consequence of linear control with quadratic control cost and \refeq{eq:Copt}, the optimal policy $\optpi(\x,t)$ can be expressed directly in terms of $\Psi$ as
\begin{equation}
  \label{eq:optpi:cont}
  \optpi(\x,t) = -\inv{\H}\trans{\B(\x)}\grad_\x\Copt(\x,t) = \inv{\H}\trans{\B(\x)}\frac{\lambda \grad_\x \Psi(\x,t)}{\Psi(\x,t)}~,
\end{equation}
making obtaining $\Psi$ the main computational challenge for problems in this class.

Assuming we are only interested in the optimal controls at certain time points, say $\{t_{1, \dots, n}\}$ with $t_n = T$, it is sufficient to compute the set $\Psi_i(x) = \Psi(x,t_i)$ and  \refeq{eq:pi} admits a representation in terms of the finite dimensional distribution $\bX = (\Xz(t_0),\cdots, \Xz(t_n))$. Specifically using the Markov property of $\Xz(t)$ and marginalising intermediate states we obtain the recursive expression
\begin{align}
\label{eq:pi:finite}
\Psi_i(x_{t_i})
    & = \Exp{\bX_{i+1}|x_{t_i}}{\expC_{i}(x_{t_i},\bX_{i+1})\cdot\Psi_{i+1}(\bX_{i+1})}~.
\end{align}
Here,
\begin{equation}
  \label{eq:problem:local}
  \expC_i(x_{t_i},x_{t_{i+1}}) = \Exp{\path{\Xz}{t_i}{t_{i+1}}|x_{t_i},x_{t_{i+1}}}{e^{-\frac{1}{\lambda}\int_{t_i}^{t_{i+1}}\Ct(\Xz(s),s)\dd s}}~,
\end{equation}
where the expectation is taken w.r.t. uncontrolled path from $x_{t_i}$ to $x_{t_{i+1}}$. Note that $-\lambda\log\expC_i$ can be seen as the (optimal) expected cost for the problem of going from $x_{t_i}$ to $x_{t_{i+1}}$ over the time horizon $[t_i,t_{i+1}]$ under dynamics and running costs corresponding to those of the overall problem given in \refeq{eq:cost}. Hence, the problem naturally decomposes into, on the one hand, a set of short horizon problems -- or indeed a nested hierarchy of such $\expC$ -- and on the other hand, a set of recursive evaluations backwards in time.

\section{Embedding of the Path Integral}
We now demonstrate that \refeq{eq:pi:finite} can be expressed in terms of linear operators in RKHSs. While the exposition is necessarily short, \cite{RKHS} provides a more through treatment of the theory of RKHSs while \cite{Smola07, Song:kFilter, Gretton:KBP, Gretton:KBR} provide the basic concepts on which we build.

\subsection{Analytical One Step Path Integral Embedding}
\label{sec:emb:analytical}
Let $\Ha$ denote the reproducing kernel Hilbert space of functions $\setZ \rightarrow \Real$ associated with the positive semi-definite kernel $\ka(\cdot,\cdot)$.
Further, let $\set{P}^{\setZ}$ be the set of random variables on $\setZ$. Following \cite{Smola07}, we define the embedding operator $\opE{\ka}:\set{P}^{\setZ}\rightarrow\Ha$ by
\begin{equation}
 \label{eq:op:E}
 	\left<h, \OpE{\ka}{\Z}\right> = \Exp{\Z}{h(\Z)}  \quad \forall \Z\in\set{P}^{\setZ}, h\in\Ha~,
\end{equation}
which constitutes a direct extension of the standard embedding of individual elements $\z\in\setZ$ into $\Ha$ used more commonly in the literature.

In the problem under consideration, the interest lies with the evaluation of $\Psi_i$ given in \refeq{eq:pi:finite} and hence, in a suitable embedding of $\bX_{i+1}|x_i$ which would allow the required expectation to be expressed as an inner product in some RKHS. Although \refeq{eq:op:E} can be directly applied -- since for fixed $x_i$, $\bX_{i+1}|x_i$ is a simple random variable -- it is convenient to consider a general conditional random variable $\Z|\y$ as a map $\setY \rightarrow \set{P}^{\setZ}$, yielding random variables over $\setZ$ given a value $\y \in \setY$, and define a conditional embedding operator $\opU{\kb\ka}: \Hb \rightarrow \Ha$ s.t.
\begin{equation}
 \label{eq:op:U}
 \OpE{\ka}{\Z|\y} = \opU{\kb\ka}\circ\OpE{\kb}{\y}~,
\end{equation}
where $\OpE{\kb}{\y} = \kb(\cdot, \y)$, i.e., the standard embedding operator of elements $\y\in\set{Y}$ used in kernel methods. An explicit form of the operator $\opU{}$ is given in \cite{Song:kFilter} by means of \emph{covariance operators}, which are generalizations of covariance matrices. Specifically, the uncentered covariance operator $\opC{\ka\kb}_{\Z\Y}$ for the joint random variable $(\Z,\Y)$ is given by
\begin{equation}
   \label{eq:op:Cov}
   \opC{\ka\kb}_{\Z\Y} = \Exp{(\Z,\Y)}{\ka(\Z,\cdot)\otimes\kb(\Y,\cdot)}~,
\end{equation}
where $\otimes$ denotes the tensor product. Note that we can see $\opC{\ka\kb}_{\Z\Y}$ as an embedding of $(\Z,\Y)$ into the tensor product space $\Hc = \Ha\otimes\Hb$, which is the RKHS of the product kernel $\kc((\z,\y),(\z',\y')) = \ka(\z,\z')\kb(\y,\y')$. Now, under certain technical considerations detailed in \cite{Gretton:KBR} but beyond the scope of this paper,
\begin{equation}
  \label{eq:op:U:asCov}
    \opU{\kb\ka} = \opC{\ka\kb}_{\Z\Y}\inv{\left(\opC{\kb\kb}_{\Y\Y}\right)}
\end{equation}
satisfies \refeq{eq:op:U}.

However, as the argument of the expectation, specifically of $\expC$, is not only a function of the random variable, i.e., $\bX_{i+1}$, but also of the conditioning $x_i$, we can not apply \refeq{eq:op:U} directly. We proceed by introducing an auxiliary random variable $\tX$ such that $P(\tX,\bX_{i+1}|\x_i) = P(\bX_{i+1}|\x_i)\delta_{\tX=\x_i}$ with $\delta$ the delta distribution, hence
\begin{equation}
      \left<h, \OpE{\ka}{\bX_{i+1},\tX|\x_i}\right> = \Exp{\bX_{i+1},\tX|\x_i}{h(\tX,\bX_{i+1})} = \Exp{\bX_{i+1}|\x_i}{h(\x_i,\bX_{i+1})}\quad\forall h\in\Ha~.
\end{equation}
Note that treating $\x_i$ as constant leads to an alternative formulation. This, although equivalent in the analytical setting, does however not immediately yield a practical empirical estimator as further discussed in the supplementary material.

Now, assume $\HPsi,\HPhi$, s.t. $\Psi \in \HPsi$, $\expC \in \HPhi$, are given\footnote{n.b., $\HPhi$ is a space of functions $\Real^{\dimX} \times \Real^{\dimX} \rightarrow \Real$, while $\HPsi$ contains functions $\Real^{\dimX} \rightarrow \Real$}. To account for the mismatch in the arity of functions in these spaces, $\HPsi$ may be trivially extended to $\HePsi$, a space of functions $\Real^{\dimX}\times\Real^{\dimX} \rightarrow \Real$, using the kernel $\psi'((u,v),(u',v')) = \psi(u,u')$, i.e., we identify $\HPsi$ and its tensor product with the RKHS of constant functions. Hence, taking the embedding of $\bX_{i+1},\tX|x_i$ into $\Hc = \HPhi \otimes \HePsi$ in which the product function of $\expC_i$, $\Psi_{i+1}$ resides, using \refeq{eq:pi:finite} and further applying \refeq{eq:op:E} and \refeq{eq:op:U} we have
\begin{align}
\Psi_i(\x)
& = \Exp{\bX_{i+1}|\bX_{i}=\x}{\expC_i(\x, \bX_{i+1})\cdot\Psi_{i+1}(\bX_{i+1})} \\
& = \left<\expC_i\otimes\Psi_{i+1}, \OpE{\kc}{\bX_{i+1},\tX|\bX_{i} = \x}\right> \\
\label{eq:pi:rkhs:analytical:b}
& = \left<\expC_i\otimes\Psi_{i+1}, \opU{\kc\ka}\circ\OpE{\ka}{\x}\right>~,
\end{align}
where $\ka$ is some kernel over $\Real^{\dimX}$ of our choosing. As will become apparent in the following (see \refeq{eq:pi:rkhs:emp}), it is convenient for computational reasons to take $\ka$ to be $\psi$ as it allows for re-use of pre-computed matrices over the recursive evaluation of estimates of $\Psi$.


\subsection{Finite Sample Estimates}
\label{sec:emb:finitesample}
Evaluation of $\op{U}$ -- thus also of the path integral embedding \refeq{eq:pi:rkhs:analytical:b} -- requires evaluation of expectations of kernels and remains therefore, in most cases, intractable. However as the operators are expressed in terms of expectations, it is straightforward to form empirical estimates, leading to practical algorithms.

First consider the general case, given a set $\set{D} = \{(\z,\y)_{0 \dots m}\}$ of i.i.d. samples from $(\Z,\Y)$. An estimate of \refeq{eq:op:Cov} is given by
\begin{equation}
\label{eq:emp:opE_opC}
\ophC{\ka\kb}{\set{D}} = \frac{1}{m}\sum_{i=1}^m \ka(\cdot, \z_i)\otimes\kb(\cdot,\y_i)~.
\end{equation}
Using the latter in conjunction with \refeq{eq:op:U:asCov}, a regularized estimate of $\opU{\kb\ka}$ is given by
\begin{equation}
\label{eq:op:U:emp}
\ophU{\kb\ka}{D} = \vec{g}^{\ka}_{\setZ}\inv{(\mat{G}^{\kb}_{\setY\setY} + \epsilon m\mat{I})}\gram{\kb}{\Y}~,
\end{equation}
where $\epsilon$ represents a regularization parameter and $\gram{\ka}{A}$, $\Gram{\ka}{AB}$ represents the vector of embeddings and Gramian respectively, i.e. $[\gram{\ka}{A}]_i = \ka(a_i,\cdot)$ and $[\Gram{\ka}{(A,B)}]_{ij} = \ka(a_i,b_j)$, for given sets $\set{A},\set{B}$ and kernel $\ka$.

Now, turning to the specific expression of interest, i.e., $\Psi$ in \refeq{eq:pi:rkhs:analytical:b}, we can form an empirical estimate based on $\set{D} = \{(x,x')_{1 \dots m}\}$ sampled i.i.d. from a joint distribution $P(X',X) = p_{\nullpi}(X'|X)\mu(X)$, s.t. $p_{\nullpi}(X'|X)$ is the p.d.f. of $\bX_{i+1}|\bX_i$ and $\mu$ is a free prior. Specifically, assume the representation of $\expC_i$ in $\HPhi$ is $\gram{\kphi}{B}\beta$, which we do not assume to be finite dimensional.  Then, given a empirical estimate $\bPsi_{i+1} = \gram{\kpsi}{A}\alpha_{i+1}$, based on some set $\set{A}$, we obtain the estimate
\begin{equation}
\label{eq:pi:rkhs:emp}
\bPsi_{i} = \gram{\kpsi}{X}\alpha_i \quad\quad\text{with}\quad\quad \alpha_{i} = \trans{\left[\Gram{\kphi}{DB}\beta \odot \Gram{\kpsi}{X'A}\alpha_{i+1}\right]}\inv{(\Gram{\kpsi}{XX} + \epsilon m \mat{I})}~,
\end{equation}
where $\odot$ denotes the Hadamard product. The term $\Gram{\kphi}{DB}\beta$ takes-- assuming without loss of generality, $\expC \in \HPhi$-- the particularly simple form
\begin{equation}
\label{eq:gram:expC}
\Gram{\kphi}{DB}\beta =  \expC(\set{X},\set{X}')  =  \trans{(\expC(x_1,x'_1), \expC(x_1,x'_2),  \dots)}~.
\end{equation}
Hence, obtaining an explicit representation of $\expC$, or indeed choosing $\HPhi$, is not necessary.

Importantly, note that $\bPsi_i$ is a finite weighted sum of kernels, hence, $\bPsi_i \in \HPsi$, which directly allows a recursive computation of all $\bPsi_1 \dots n$ and leads, using \refeq{eq:optpi:cont}, to an approximate optimal policy for fine discretisations of the problem. Furthermore, all required matrices are functions of the sample data only and as such can be pre-computed. Finally, the estimator is consistent (see supplementary material for proof). While for bounded expected costs, convergence of $\bPsi$ implies convergence of the estimate of the expected cost (see supplementary material), convergence of the latter can be slow for large values due to the log transform, leading in practice to poor policies in regions where $\Psi$ is small. We would like to emphasize that this problem is not limited to the methods proposed here, but is a characteristic of any approach based on estimation of $\Psi$, e.g., as also noted by \cite{Todorov:Aprox:LS}. To overcome this problem in practice, we form a Laplace approximation to $\bPsi$ at a local mode and use it where $\bPsi$ is small - this corresponds to a local quadratic approximation of the value function,  resulting in a linear policy which steers the system towards regions of high $\bPsi$.

\section{Efficient Estimators}
\label{sec:estimators}
The basic estimator \refeq{eq:pi:rkhs:emp} has several drawbacks. For one it has a relatively high computational complexity of $\fun{O}(m^3)$ for the matrix inversion, only required once if the same $\set{D}$ is used in each time step, and subsequently $\fun{O}(m^2)$ per iteration. Additionally, sample data under the uncontrolled dynamics is required, thus not allowing for off-policy learning. To overcome these problems two alternative estimators based on weighted samples, which partly address these issues, are discussed in the supplementary material. Specifically, the estimator employs Gram-Schmidt orthogonalisation, presented previously by \cite{Gretton:KBP}, which reduces the computational complexity to $\fun{O}(\hat{m}^2)$ with $\fun{O}(\hat{m}^3 + \hat{m}^2m)$ pre-computations for a chosen $\hat{m} \ll m$, and a novel importance sampling based estimator. We choose to defer the discussion of these in order to address a, in our opinion,   often overlooked aspect of efficiency when solving varying problems under the same dynamics. In practice, tasks are not performed in isolation, rather varying instances of often related problems have to be solved repeatedly, e.g., an optimized single reaching movement is of limited use since complex interactions require a series of such movements with changing start and target states. Previous approaches generally assume re-initialisation for each problem instance, e.g., Monte Carlo methods require novel samples, even under such trivial changes as the start state. In the following, we discuss extensions to the proposed method which improve sampling efficiency in exactly these cases, allowing efficient sample re-use over repeated applications.

\subsection{Transfer Learning via Transition Sample Re-use}
\label{sec:dyn:sample_reuse}
A limitation of the estimator arising in practice is the necessity of evaluating $\expC$ at the training transitions (cf. \refeq{eq:pi:rkhs:emp} and \refeq{eq:gram:expC}) which, in general, may be infeasible. It is therefore desirable to obtain an estimator based on evaluation of $\expC$ on a separate, ideally arbitrary, data set $\set{D'}$. Observe that
\begin{equation*}
	\Gram{\kphi}{DB}\beta
	= \langle\expC,\kphi(\set{D},\cdot)\rangle
	= \langle\expC,\opC{\kphi\kphi}_{ZZ}\inv{\left(\opC{\kphi\kphi}_{ZZ}\right)}\kphi(\set{D},\cdot)\rangle
	\approx \overbrace{\trans{\beta}\Gram{\kphi}{BD'}}^{\expC(\set{D}')} \inv{(\Gram{\kphi}{D'D'} + \epsilon m' \mat{I})}\Gram{\kphi}{D'D}~,
\end{equation*}
where $Z$ is an some free random variable with support on $\Real^{\dimX}\times\Real^{\dimX}$ and we used an empirical estimator based on a data set $\set{D'} = \{(x,x')_{1\dots m'}\}$ of i.i.d. samples from $Z$ (often in practice $\set{D'} \subseteq \set{D}$). As indicated evaluation of the r.h.s. only requires evaluation of $\expC$ at elements of $\set{D}'$, hence substituting into \refeq{eq:pi:rkhs:emp} gives the desired result. In particular we are now able to pre-compute and re-use the inverse matrix of \refeq{eq:pi:rkhs:emp} across changing tasks and, assuming time stationary dynamics, across different time steps. This is of importance for efficient estimation in, e.g., the Reinforcement Learning setting where incurred costs are known only at observed transitions or in cases where $\expC$ can be freely evaluated but it is expensive to do so, while generating large sets of transition samples may be comparatively cheap, e.g., the case of simple kinematic control where cost evaluation requires collision detection. Note that this form makes explicit use of the kernel $\kphi$, and while we may not be able to guarantee $\expC \in \rkhs{H}{\kphi}$, by choosing a kernel such that the projection of $\expC$ onto $\rkhs{H}{\kphi}$ is close to $\expC$, we can expect good results.

\subsection{Task augmented sampling}
\label{sec:abstraction:task}
We now turn to the question of the sampling distribution. While in general samples are required from the task agnostic dynamics $\Xz$, a task often induces regularities which suggests more suitable sampling distributions. In particular considering the role $\expC$ takes in \refeq{eq:pi:rkhs:emp} as a weight vector, it appears desirable, akin to importance sampling, to concentrate samples in regions of high $\expC$. Obviously $\expC$ can be used to guide the choice of the prior $\mu$ (c.f. Section \ref{sec:emb:finitesample}), however, in the context of repeated tasks we can go further and incorporate $\expC$ partly into the sampling process allowing, amongst others, for incremental learning of the task.



Consider the specific situation where one wishes to execute several task instances of a generic skill. This situation is often characterised by an invariant cost component relating to the skill and a task specific cost component -- if one looks at walking as an example, we wish to stay balanced in each step but the foot placement target will differ from step to step. Formally assume the state cost decomposes as
\begin{equation}
\label{eq:cost:factored}
    \Ct(x,\theta,t) = \Cskill(\x,t) + \Ctask(\x,\theta,t) ~,
\end{equation}
where $\theta$ parameterises the task. In this case, we may write the path integral \refeq{eq:pi} as
\begin{equation}
  \label{eq:pi:task}
    \Psi =  \Exp{\path{\Xnu}{t}{T}|x_t}{e^{-\int_t^T \frac{1}{\lambda}\Ctask(\Xnu(t),\theta,t)}\Psi(\Xnu(T),T)}~,
\end{equation}
where the expectation is now taken w.r.t. path of $\Xnu$, which are modified dynamics which absorb the invariant skill component of the cost, i.e., they bias the path dynamics based on the Fokker-Plank equation
\begin{equation}
  \label{eq:dyn:aug}
    \partial_t \nu = - \frac{\Cskill}{\lambda}\nu - \grad_x \left(f\nu\right) + \grad_x^2 \left(\mat{B}\trans{\mat{B}}\nu\right)~,
\end{equation}
in other words, the augmented dynamics tends to restrict the solutions to lie on, or at least stay close to, some \emph{skill space}.

A practical approach for exploiting the induced structure, is to learn the relevant subspace from a few example demonstrations sampled, using e.g. the approach in \cite{Havoutis:ICRA}, and sample $\set{D}$ on the learned space. Such explicit learning of the space has several advantages; foremost, we can use knowledge of the space to choose an appropriate kernel. Also, while $\Ctask$ is generally well defined by specific objectives we wish to achieve, the skill component often takes a more abstract form, e.g. we may desire movements to overall appear 'natural', and may only be given implicitly by expert demonstrations of desired movements, in which case the proposed framework allows \refeq{eq:pi:task} to be used to perform optimal control without explicitly referring to the implicit costs.

\section{Experimental Validation}

\subsection{Double Slit}

We first consider the double slit problem, previously studied by \cite{Kappen:PI} to demonstrate Monte Carlo approaches to path integral control. The problem is sufficiently simple to allow for a closed form solution for $\Psi$ to be obtained, but complex enough to highlight the shortcomings of some previous approaches. The task concerns a particle moving with constant velocity in one coordinate, while noise and controls affects it's position in an orthogonal direction. The aim is to minimise the square error to a target position at some final time, while also avoiding obstacles at some intermediate time, as illustrated in Fig.~\ref{fig:ds}(a). Specifically, the one dimensional dynamics are $\dd x = u + \dd\xi$ and the cost is given by
\begin{equation}
   \CT(x) = \omega(x - x_{target})^2 \quad\text{and}\quad \Ct(x,t) = \begin{cases}
									      10^4 & \text{if } t = \frac{T}{2} \text{ and } x \in Obstacle\\
                                                                              0    & \text{else}
                                                                             \end{cases}~,
\end{equation}
where $\omega$ is a weight. We considered a discretisation with time step $0.02s$, i.e. 100 time steps.

We compare the true optimal policy to those obtained using two variants of the proposed estimator, $\bPsi_{\text{OC}}$ and $\bPsi_{\text{RL}}$. The latter is based on a Reinforcement learning setting, learning from trajectory data without access to the cost, and uses the approach for sample sharing across time steps discussed in Section~\ref{sec:dyn:sample_reuse}. Meanwhile, $\bPsi_{\text{OC}}$ is based on single transitions from uniformly sampled start states and uses knowledge of the cost function to evaluate $\expC$ in each step. In both cases we use the low rank approximation (see supplementary material) and square exponential kernels $\psi(x,y) = \exp\{(x-y)^2/\lambda\}$ with $\lambda$ set to the median distance of the data. For comparison, we also consider two alternative approaches -- firstly, the trajectory based Monte Carlo approach of \cite{Theodorou:PI:RBDyn}, using the same number of trajectories as used in the Reinforcement Learning setting and on the other hand, a variational approximation, specifically a Laplace approximation to the true $\Psi$ to obtain a linear approximation of the optimal policy. As can be seen in Fig.~\ref{fig:ds}(b), the proposed approach leads to policies which significantly improve upon those based on the alternative Monte Carlo approach and which are comparable to those obtained from the variational approximation, which however was computed based on knowledge of the true $\Psi$. In particular, note that the proposed approach makes better use of the sample provided, finding a policy which is applicable for varying starting positions, as illustrated in Fig.~\ref{fig:ds}(a). As seen from the trajectories in Fig.~\ref{fig:ds}(a), the Monte Carlo approach on the other hand fails to capture the multi modality of the optimal policy leading to severely impoverished results when applied to starting point B without sampling a new data set (cf. Fig.~\ref{fig:ds}(b)). The variational approximation on the other hand similarly requires re-computation for each new starting location, without which results would also be significantly affected.

To illustrate the dependence of the estimate on the sample size we compare in Fig.~\ref{fig:ds}(c) the evolution of the $L_1$ error of the estimates of $\Psi$ at time $t=0$. Sample size refers to total number of transition samples seen, hence for $\bPsi_{\text{RL}}$ the number of trajectories was the sample size divided by 100. In order to also highlight the advantages of the sample re-use afforded by the approach in Section~\ref{sec:dyn:sample_reuse}, we also compare with $\bPsi$, the basic estimator given data of the same form as $\bPsi_{\text{RL}}$, i.e. recursive application of \refeq{eq:pi:rkhs:emp} without sample sharing across time steps.

\begin{figure}
\centering
\captionsetup[subfloat]{nearskip=0pt, captionskip=0pt}
\subfloat[]{
\label{fig:ds:setup}
%
%
\tikzset{
every pin/.style={font=\normalsize},
small dot/.style={thick, fill=black,circle,scale=0.3}
}
\definecolor{fillColorA}{rgb}{1,0.0,0.0}
\definecolor{edgeColorA}{rgb}{0.7,0.0,0.0}
\definecolor{fillColorB}{rgb}{0.2,0.5,1}
\definecolor{edgeColorB}{rgb}{0.0,0.0,0.7}
\definecolor{fillColorC}{rgb}{0.2,0.5,0.8}
\definecolor{edgeColorC}{rgb}{0.1,0.1,0.8}
\begin{tikzpicture}[scale=0.5,
        baseline]
\begin{pgfinterruptboundingbox}
\begin{axis}[%
clip=false,
font={\LARGE},
width =0.6\textwidth,
height=0.6\textwidth,
ylabel={Position},
xlabel={Time},
xtick={0,25,50,75,100},
xticklabels={0,0.5,1,1.5,2},
xmin=0, xmax=100,
ymin=-4, ymax=4,
axis on top,
every axis y label/.style=
{at={(-0.1,0.5)},rotate=90,anchor=center},
legend entries={$\bPsi_{\text{RL}}$,MC},
legend style = {font = \normalsize}
]

\addplot [
color=edgeColorA,
solid,
ultra thick
]
coordinates{ (1,-3) (2,-2.92277) (3,-2.94866) (4,-2.87961) (5,-2.83825) (6,-2.84573) (7,-2.85604) (8,-2.8368) (9,-2.85992) (10,-2.87824) (11,-2.8768) (12,-2.90408) (13,-2.90467) (14,-2.87869) (15,-2.87957) (16,-2.88875) (17,-2.88831) (18,-2.87425) (19,-2.86468) (20,-2.82525) (21,-2.82762) (22,-2.86955) (23,-2.88316) (24,-2.87255) (25,-2.88585) (26,-2.88359) (27,-2.91962) (28,-2.89729) (29,-2.93023) (30,-2.93667) (31,-2.93637) (32,-2.91843) (33,-2.88579) (34,-2.91731) (35,-2.90147) (36,-2.90407) (37,-2.88035) (38,-2.88675) (39,-2.88389) (40,-2.91266) (41,-2.88031) (42,-2.91314) (43,-2.92364) (44,-2.91337) (45,-2.81946) (46,-2.83056) (47,-2.80119) (48,-2.85669) (49,-2.87733) (50,-2.86116) (51,-2.88795) (52,-2.86993) (53,-2.81634) (54,-2.7186) (55,-2.64296) (56,-2.55874) (57,-2.5507) (58,-2.5161) (59,-2.49413) (60,-2.46935) (61,-2.40702) (62,-2.38299) (63,-2.31089) (64,-2.27418) (65,-2.23948) (66,-2.17246) (67,-2.14874) (68,-2.11126) (69,-2.07804) (70,-2.07056) (71,-2.00839) (72,-1.9642) (73,-1.96147) (74,-1.92347) (75,-1.86576) (76,-1.80348) (77,-1.78014) (78,-1.80995) (79,-1.82893) (80,-1.7517) (81,-1.72232) (82,-1.65327) (83,-1.59932) (84,-1.53832) (85,-1.45851) (86,-1.43807) (87,-1.38456) (88,-1.31697) (89,-1.24026) (90,-1.19908) (91,-1.11487) (92,-1.05423) (93,-1.00665) (94,-0.909492) (95,-0.820886) (96,-0.768168) (97,-0.715288) (98,-0.605618) (99,-0.511918) (100,-0.417482)
};

\addplot [
color=edgeColorA,
dashed,
ultra thick
]
coordinates{ (1,-3) (2,-2.98524) (3,-3.23359) (4,-3.27825) (5,-3.38261) (6,-3.46481) (7,-3.6758) (8,-3.81335) (9,-3.98953) (10,-3.70528) (11,-3.66922) (12,-3.78241) (13,-3.3282) (14,-3.4726) (15,-3.3327) (16,-3.34232) (17,-3.15154) (18,-3.00919) (19,-2.92544) (20,-2.6452) (21,-2.64932) (22,-2.47039) (23,-2.34957) (24,-2.35077) (25,-2.51355) (26,-2.45425) (27,-2.87486) (28,-3.14535) (29,-3.26951) (30,-3.26046) (31,-2.9686) (32,-3.12572) (33,-2.86528) (34,-2.97876) (35,-2.7157) (36,-2.89426) (37,-3.06828) (38,-2.91108) (39,-2.81262) (40,-3.04087) (41,-3.22238) (42,-3.11952) (43,-2.84993) (44,-2.36878) (45,-2.13874) (46,-2.45265) (47,-2.73467) (48,-2.59361) (49,-2.63966) (50,-2.65714) (51,-2.40662) (52,-2.55871) (53,-2.05961) (54,-1.5784) (55,-1.29586) (56,-1.17096) (57,-1.20535) (58,-1.10356) (59,-1.11348) (60,-1.11001) (61,-1.06565) (62,-1.09113) (63,-1.05717) (64,-0.946051) (65,-0.908659) (66,-0.908302) (67,-0.841076) (68,-0.73446) (69,-0.599972) (70,-0.594116) (71,-0.528759) (72,-0.514812) (73,-0.579912) (74,-0.483102) (75,-0.524454) (76,-0.466569) (77,-0.424819) (78,-0.370963) (79,-0.295084) (80,-0.270492) (81,-0.300521) (82,-0.325082) (83,-0.32396) (84,-0.300778) (85,-0.209124) (86,-0.224388) (87,-0.248532) (88,-0.262581) (89,-0.259776) (90,-0.272686) (91,-0.315968) (92,-0.332693) (93,-0.272234) (94,-0.251997) (95,-0.218242) (96,-0.246291) (97,-0.201994) (98,-0.0509055) (99,-0.065026) (100,-0.0902452)
};

\addplot [
color=gray,
solid,
line width=4.0pt
]
coordinates{ (50.5,4) (50.5,3.3)
};

\addplot [
color=gray,
solid,
line width=4.0pt
]
coordinates{ (50.5,2.5) (50.5,-2.5)
};

\addplot [
color=gray,
solid,
line width=4.0pt
]
coordinates{ (50.5,-3.3) (50.5,-4)
};

\node[small dot, pin={[pin edge={edgeColorA, ultra thick}]left:{\LARGE \textbf{\color{edgeColorA} A}}}] at (axis cs: 0, -3) {};
\addplot [
solid,
mark = *,
mark options= {%
  scale=2.5,fill=fillColorA,draw=edgeColorA,line width = 2pt 
},
]
coordinates{ (0,-3) 
};

\node[small dot, pin={[pin edge={edgeColorB, ultra thick}]above right:{\LARGE \textbf{\color{edgeColorB} B}}}] at (axis cs: 0, 1.75) {};
\addplot [
solid,
mark = *,
mark options= {%
  scale=2.5, fill=fillColorB,draw=edgeColorB, line width = 2pt
},
]
coordinates{ (0,1.75) 
};

\addplot [
solid,
mark = *,
mark options= {%
  scale=2.5, fill=gray,draw=gray!70!black, line width = 2pt
},
]
coordinates{ (100,0) 
};

\addplot [
color=edgeColorA,
solid,
ultra thick
]
coordinates{ (1,-3) (2,-2.92277) (3,-2.94866) (4,-2.87961) (5,-2.83825) (6,-2.84573) (7,-2.85604) (8,-2.8368) (9,-2.85992) (10,-2.87824) (11,-2.8768) (12,-2.90408) (13,-2.90467) (14,-2.87869) (15,-2.87957) (16,-2.88875) (17,-2.88831) (18,-2.87425) (19,-2.86468) (20,-2.82525) (21,-2.82762) (22,-2.86955) (23,-2.88316) (24,-2.87255) (25,-2.88585) (26,-2.88359) (27,-2.91962) (28,-2.89729) (29,-2.93023) (30,-2.93667) (31,-2.93637) (32,-2.91843) (33,-2.88579) (34,-2.91731) (35,-2.90147) (36,-2.90407) (37,-2.88035) (38,-2.88675) (39,-2.88389) (40,-2.91266) (41,-2.88031) (42,-2.91314) (43,-2.92364) (44,-2.91337) (45,-2.81946) (46,-2.83056) (47,-2.80119) (48,-2.85669) (49,-2.87733) (50,-2.86116) (51,-2.88795) (52,-2.86993) (53,-2.81634) (54,-2.7186) (55,-2.64296) (56,-2.55874) (57,-2.5507) (58,-2.5161) (59,-2.49413) (60,-2.46935) (61,-2.40702) (62,-2.38299) (63,-2.31089) (64,-2.27418) (65,-2.23948) (66,-2.17246) (67,-2.14874) (68,-2.11126) (69,-2.07804) (70,-2.07056) (71,-2.00839) (72,-1.9642) (73,-1.96147) (74,-1.92347) (75,-1.86576) (76,-1.80348) (77,-1.78014) (78,-1.80995) (79,-1.82893) (80,-1.7517) (81,-1.72232) (82,-1.65327) (83,-1.59932) (84,-1.53832) (85,-1.45851) (86,-1.43807) (87,-1.38456) (88,-1.31697) (89,-1.24026) (90,-1.19908) (91,-1.11487) (92,-1.05423) (93,-1.00665) (94,-0.909492) (95,-0.820886) (96,-0.768168) (97,-0.715288) (98,-0.605618) (99,-0.511918) (100,-0.417482)
};

\addplot [
color=edgeColorB,
solid,
ultra thick
]
coordinates{ (1,1.75) (2,1.76207) (3,1.73325) (4,1.78323) (5,1.76661) (6,1.79046) (7,1.82502) (8,1.83108) (9,1.82811) (10,1.83872) (11,1.81587) (12,1.82585) (13,1.79688) (14,1.80904) (15,1.849) (16,1.83633) (17,1.81149) (18,1.81401) (19,1.82053) (20,1.82742) (21,1.85015) (22,1.8398) (23,1.84505) (24,1.92649) (25,1.9589) (26,1.9729) (27,2.00518) (28,2.04968) (29,2.06985) (30,2.07507) (31,2.08322) (32,2.11483) (33,2.16707) (34,2.24769) (35,2.26903) (36,2.27213) (37,2.30529) (38,2.34473) (39,2.38732) (40,2.40522) (41,2.41486) (42,2.41463) (43,2.40127) (44,2.39408) (45,2.36104) (46,2.42964) (47,2.45152) (48,2.46186) (49,2.4932) (50,2.5211) (51,2.49779) (52,2.56403) (53,2.52005) (54,2.45985) (55,2.42538) (56,2.38115) (57,2.37865) (58,2.31668) (59,2.27116) (60,2.22969) (61,2.2513) (62,2.25022) (63,2.24075) (64,2.17629) (65,2.12319) (66,2.05866) (67,1.98687) (68,1.93764) (69,1.96463) (70,1.88007) (71,1.84715) (72,1.83062) (73,1.81582) (74,1.75734) (75,1.71767) (76,1.7098) (77,1.68112) (78,1.66155) (79,1.61264) (80,1.57562) (81,1.54873) (82,1.4992) (83,1.4608) (84,1.44903) (85,1.38673) (86,1.28964) (87,1.2319) (88,1.19798) (89,1.18676) (90,1.1146) (91,1.09112) (92,1.05757) (93,1.03283) (94,0.970808) (95,0.937129) (96,0.901501) (97,0.819581) (98,0.709426) (99,0.617487) (100,0.463461)
};

\addplot [
color=edgeColorB,
dashed,
ultra thick
]
coordinates{ (1,1.75) (2,1.21079) (3,0.673113) (4,0.161495) (5,-0.329521) (6,-0.783091) (7,-1.3785) (8,-1.8645) (9,-2.33032) (10,-2.74072) (11,-3.17202) (12,-3.53183) (13,-3.41851) (14,-3.38063) (15,-3.318) (16,-3.38202) (17,-3.1863) (18,-3.06878) (19,-2.94471) (20,-2.60029) (21,-2.6627) (22,-2.49264) (23,-2.37374) (24,-2.34291) (25,-2.51414) (26,-2.54275) (27,-2.90551) (28,-3.17129) (29,-3.2648) (30,-3.26656) (31,-2.92712) (32,-3.14365) (33,-2.84981) (34,-2.953) (35,-2.70863) (36,-2.89317) (37,-2.99051) (38,-2.91096) (39,-2.81133) (40,-3.08277) (41,-3.1934) (42,-3.09474) (43,-2.8507) (44,-2.39764) (45,-2.11787) (46,-2.45563) (47,-2.75153) (48,-2.53492) (49,-2.59316) (50,-2.68911) (51,-2.40859) (52,-2.56969) (53,-2.12591) (54,-1.62674) (55,-1.30569) (56,-1.19971) (57,-1.15588) (58,-1.12606) (59,-1.10154) (60,-1.0792) (61,-1.01035) (62,-1.11241) (63,-1.05447) (64,-0.977403) (65,-0.946303) (66,-0.920614) (67,-0.858935) (68,-0.732676) (69,-0.59925) (70,-0.60251) (71,-0.544723) (72,-0.530067) (73,-0.553575) (74,-0.507482) (75,-0.480113) (76,-0.447047) (77,-0.408125) (78,-0.34425) (79,-0.32217) (80,-0.270148) (81,-0.356396) (82,-0.311691) (83,-0.35174) (84,-0.263946) (85,-0.274387) (86,-0.276531) (87,-0.169807) (88,-0.296116) (89,-0.231527) (90,-0.301309) (91,-0.334601) (92,-0.321999) (93,-0.240856) (94,-0.256085) (95,-0.253279) (96,-0.164544) (97,-0.194148) (98,-0.0926771) (99,-0.0338818) (100,-0.00912387)
};
\end{axis}
\end{pgfinterruptboundingbox}
\useasboundingbox
(current axis.below south west)
rectangle (current axis.above north east);
\end{tikzpicture}
}
\subfloat[]{
\label{fig:ds:Jpi}
\tikzset{
every pin/.style={font=\normalsize},
small dot/.style={thick, fill=black,circle,scale=0.3}
}

\begin{tikzpicture}[scale=0.5,baseline]

\definecolor{fillColorA}{rgb}{1,0.0,0.0}
\definecolor{edgeColorA}{rgb}{0.7,0.0,0.0}
\definecolor{fillColorB}{rgb}{0.2,0.5,1}
\definecolor{edgeColorB}{rgb}{0.0,0.0,0.7}
\definecolor{fillColorC}{rgb}{0.2,0.5,0.8}
\definecolor{edgeColorC}{rgb}{0.1,0.1,0.8}
\begin{axis}[%
scaled y ticks = base 10:-3,
font={\LARGE},
width =0.45\textwidth,
height=0.6\textwidth,
xmin=-0.5, xmax=4.5,
ymin=0.0, 
ylabel={Expected Cost},
xlabel={\ },
xtick={0,1,2,3,4},
xticklabels={{$\optpi$},{MC},{Var},{$\bPsi_{\text{\small RL}}$},{$\bPsi_{\text{\small OC}}$}},
axis on top,
every axis y label/.style=
{at={(-0.12,0.5)},rotate=90,anchor=center}
]

\addplot[
fill=fillColorA,
draw=edgeColorA,
ybar,
thick,
bar width=10pt,
]
coordinates{
 (-0.25,119.7009) (0.75,1725.35) +- (0,197.728)(1.75,273.3345)+-(0,0)(2.75,558.406) +- (0,229.663)(3.75,486.253) +- (0,103.704)%
};

\addplot[
only marks
]
plot[error bars/.cd, y dir = both, y explicit]
coordinates{
 (0.75,1725.35) +- (0,197.728)(1.75,273.3345)+-(0,386.5533)(2.75,558.406) +- (0,229.663)(3.75,486.253) +- (0,103.704)%
};

%
\addplot[
 fill=fillColorB,
 draw=edgeColorB,
 ybar,
 thick,
 bar width=10pt,
 ]
 coordinates{
  (0.25,116.7097)(1.25,7346.97) +- (0,207.856)(2.25,276.3513)+-(0,0)(3.25,816.746) +- (0,200.411)(4.25,574.943) +- (0,95.0806)
};
\addplot[only marks]
 plot[error bars/.cd, y dir = both, y explicit]
 coordinates{
  (1.25,7346.97) +- (0,207.856)(2.25,276.3513)+-(0,390.8198)(3.25,816.746) +- (0,200.411)(4.25,574.943) +- (0,95.0806)
};
\end{axis}
\end{tikzpicture}
}
\subfloat[]{
\label{fig:ds:psi:ex}
\includegraphics[height=0.245\textwidth]{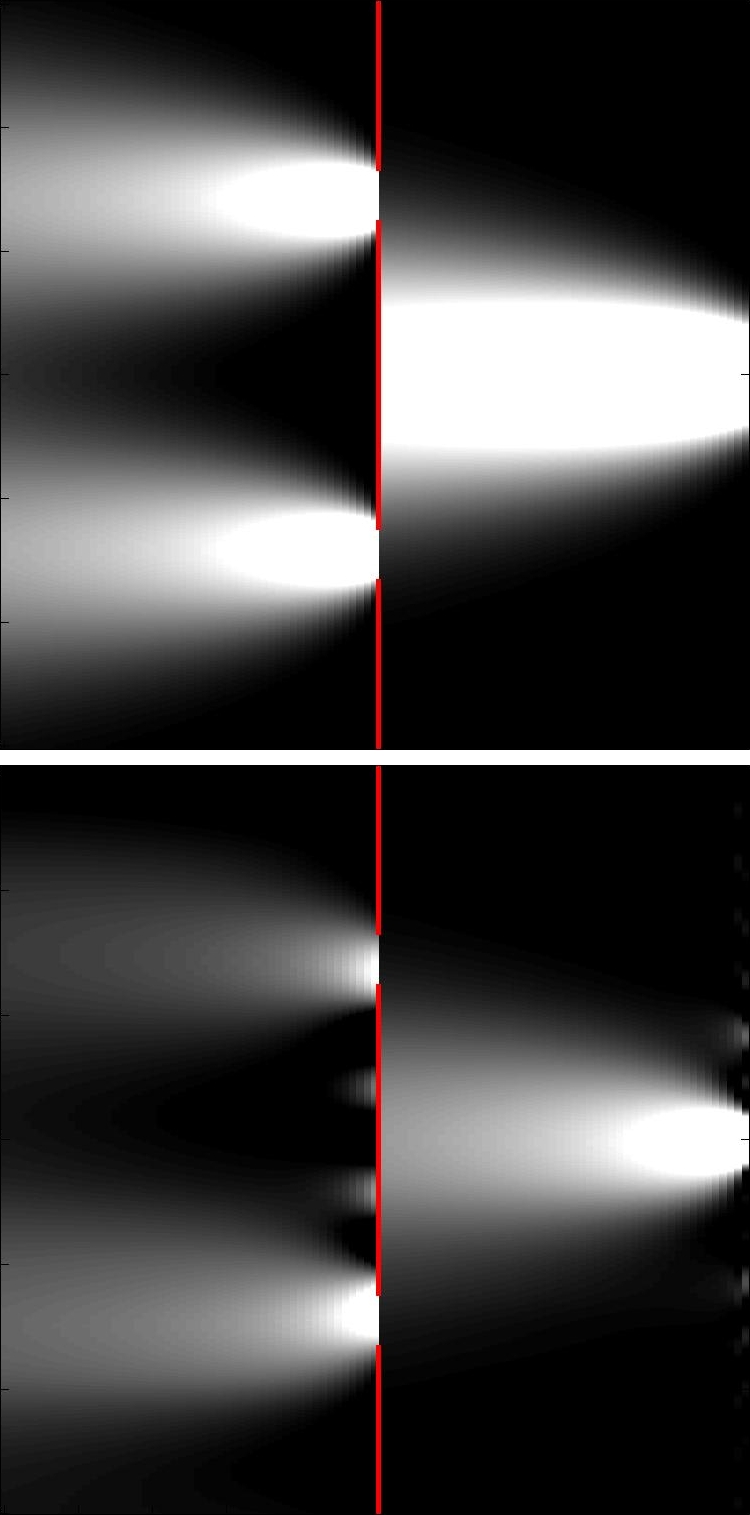}
}
\subfloat[]{
\label{fig:ds:psi}
\begin{tikzpicture}[scale=0.5, baseline]
\begin{axis}[%
font={\LARGE},
xtick={10000,20000},
scaled x ticks=false,
legend style={
cells={anchor=west},
},
xticklabels={$10^4$,$2\cdot10^4$},
minor tick num=1,
xlabel={Transition Samples},
ylabel={$\norm[L_1]{\bPsi_0 - \Psi(\cdot,0)}$},
width = 0.5\textwidth,
height= 0.6\textwidth,
xmin=0, xmax=25000,
ymin=0, ymax=4.99,
axis on top,
every axis y label/.style=
{at={(-0.15,0.5)},rotate=90,anchor=center}
]

\addplot [
color=gray,
solid,
ultra thick,
mark=*
]
plot [error bars/.cd, y dir = both, y explicit]
coordinates{ (1000,1.54366) +- (0,0.539748) (2000,1.41324) +- (0,0.695359) (4000,0.906453) +- (0,0.226353) (6000,0.778993) +- (0,0.142454) (8000,0.757037) +- (0,0.122371) (10000,0.772976) +- (0,0.138471) (20000,0.658886) +- (0,0.0593721) (40000,0.625135) +- (0,0.0270534) (60000,0.603079) +- (0,0.0239207)
};
\addlegendentry{$\bPsi$}

\addplot [
color=blue,
solid,
ultra thick,
mark=*
]
plot [error bars/.cd, y dir = both, y explicit]
coordinates{ (1000,4.02803) +- (0,7.29773) (2000,1.8587) +- (0,2.8879) (4000,0.416072) +- (0,0.198785) (6000,0.344235) +- (0,0.204038) (8000,0.30137) +- (0,0.177084) (10000,0.289332) +- (0,0.141028) (20000,0.160278) +- (0,0.0657207) (40000,0.0953596) +- (0,0.0262281) (60000,0.0793779) +- (0,0.0230265)
};
\addlegendentry{$\bPsi_{\text{\small RL}}$}

\addplot [
color=red,
solid,
ultra thick,
mark=*
]
plot [error bars/.cd, y dir = both, y explicit]
coordinates{ (1000,2.57602) +- (0,2.63927) (2000,1.2291) +- (0,1.01941) (4000,0.493858) +- (0,0.546269) (6000,0.40459) +- (0,0.563312) (8000,0.214606) +- (0,0.129799) (10000,0.173871) +- (0,0.0838607) (20000,0.153795) +- (0,0.0714842) (40000,0.137853) +- (0,0.0666168) (60000,0.114274) +- (0,0.0355282)
};
\addlegendentry{$\bPsi_{\text{\small OC}}$}

\end{axis}
\end{tikzpicture}
}
\caption{\label{fig:ds}Results for the double slit problem.
\subref{fig:ds:setup} Problem setup and mean trajectories from policies MC and $\bPsi_{\text{RL}}$ for two start points are shown. Obstacles and target are shown in gray. 
\subref{fig:ds:Jpi} Empirical expected cost for policies based on various methods for the two start states.
\subref{fig:ds:psi:ex} The true $\Psi$ \emph{(top)} and the estimate $\bPsi_{\text{OC}}$ \emph{(bottom)} based on $10^4$ samples.  
\subref{fig:ds:psi} The $L_1$ error of estimates of $\Psi(\cdot,0)$ as a function of (transition) sample size, n.b. in case of $\bPsi$ and $\bPsi_{\text{RL}}$ data was sampled as 100 step trajectories, for various estimators.}
\end{figure}

\subsection{Arm Subspace Reaching Task}
We consider reaching tasks on a subspace of the end-effector space of a torque controlled 5dof arm, simulating constrained tasks such as, for e.g., drawing on a whiteboard or pushing objects around on a table. Here the skill component consists of moving with the end-effector staying close to a two dimensional task space, while the task instances are given by specific reach targets. The task space used is a linear subspace of the end effector space, n.b., hence, a non linear subspace of the joint space, and the cost comprises the two components
\begin{equation}
      \Cskill(\x,t) = \omega_{skill}\norm{\mat{J}\varphi(\x) - \vec{j}}^2 \quad\text{and}\quad \Ctask(\x,\theta) = \omega_{task}\norm{\varphi(\x) - \theta}^2~,
\end{equation}
where $\varphi(\cdot)$ is the mapping from joint to end-effector coordinates, $\mat{J}$ \& $\vec{j}$ define the task subspace, $\theta$ specifies the reaching target and $\omega$'s are weights. We again consider position control over a 2s horizon with a 0.02s discretisation.

This task is challenging for sample based approaches as the low cost trajectories are restricted to a small subspace, necessitating large sample sizes to obtain good results for an individual reaching target, even if, as suggested in \cite{Theodorou:PI:RBDyn} and done here, an inverse dynamics policy is used which significantly improves end-effector exploration. However, concentrating on the case of changing targets, we exploit the ideas from Section~\ref{sec:abstraction:task} by assuming the operators have been estimated under the skill augmented dynamics\footnote{n.b., while here such a sample is generated explicitly, the more time consuming approach of using the importance sample based estimator and collecting a sample under $\Xz$ could be used} (cf. \refeq{eq:pi:task}), and consider subsequent learning for a novel task using the estimator from Section~\ref{sec:dyn:sample_reuse}, utilising the already estimated operators in two ways. On the one hand, they are directly used in the calculation of $\bPsi$, on the other hand, noting, that as the trajectories are only required to provide $\set{D}'$, hence do not have to be sampled under a specific policy, we use the policy arising when considering $\Cskill$ only, i.e., the skill policy associated with $\bPsi$ computed using the given operators and $\Ctask(\cdot) = 0$.

\begin{figure}
\centering
\captionsetup[subfloat]{position=top}
\subfloat[]{
\label{fig:arm:traintraj}
\includegraphics[width=0.29\textwidth]{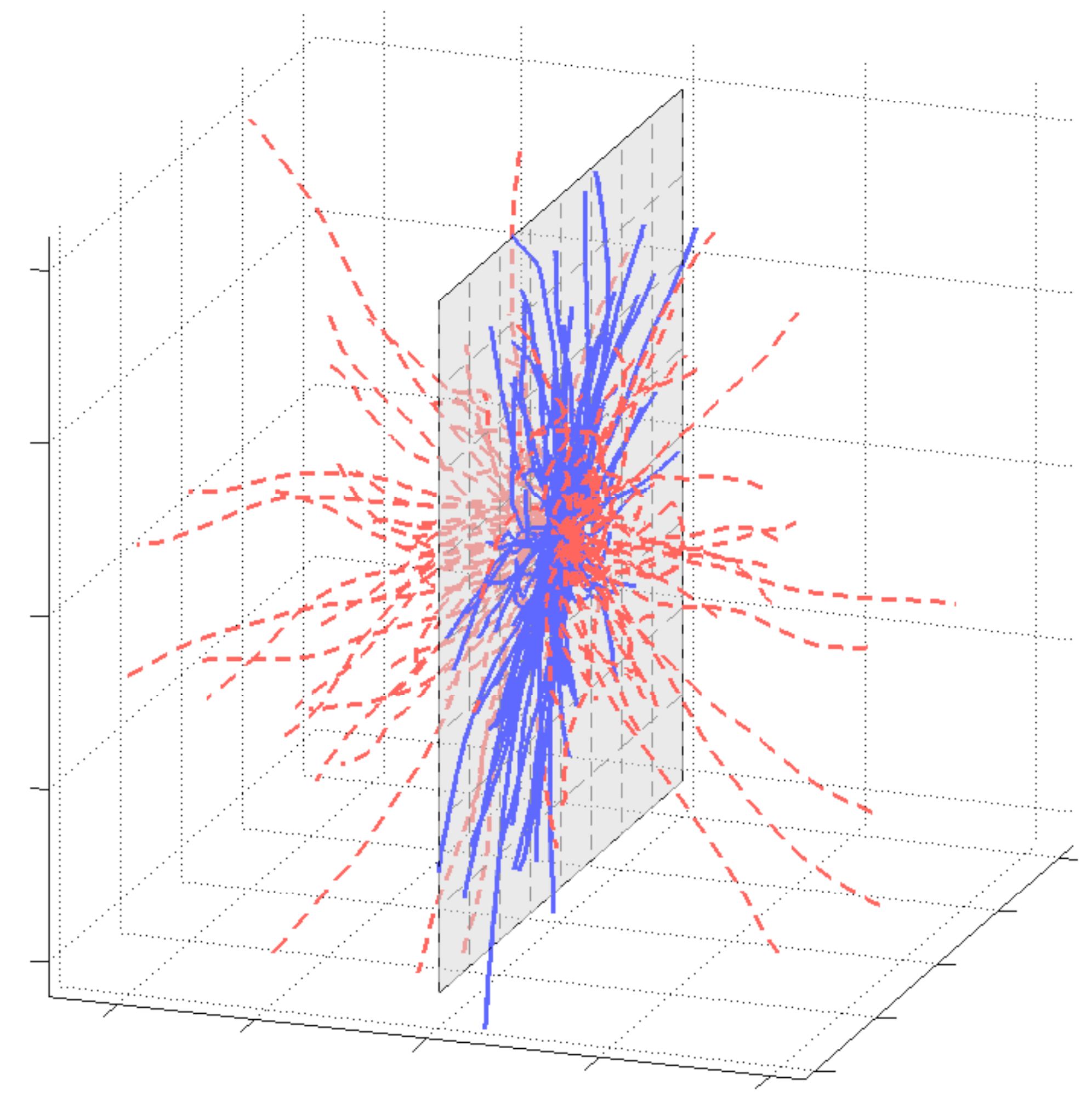}
}
\subfloat[]{
\label{fig:arm:setup}
\includegraphics[width=0.29\textwidth]{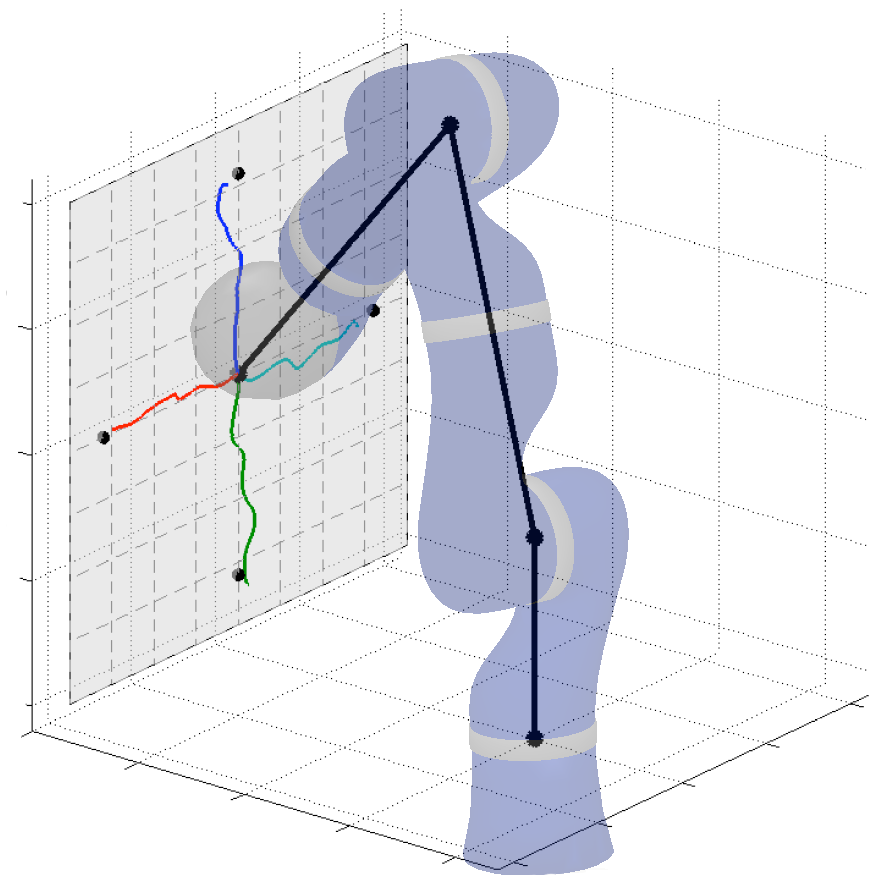}
}
\subfloat[]{
\label{fig:arm:psi}
\begin{tikzpicture}[scale=0.6,baseline]
\begin{axis}[%
font={\LARGE},
xlabel={Number Trajectories},
ylabel={$\norm[L_1]{\bPsi_0 - \bPsi^{ref}_0}$},
ytick={0,1},
minor y tick num=4,
width = 0.6\textwidth,
height= 0.6\textwidth,
xmin=0, xmax=220,
ymin=-0.19, ymax=1.2,
axis on top,
]

\addplot [
color=blue,
solid,
mark=*,
ultra thick
]
plot [error bars/.cd, y dir = both, y explicit]
coordinates{ (0,1.00439) +- (0,0.000136296) (10,0.775183) +- (0,0.395216) (50,0.371416) +- (0,0.3925) (100,0.254292) +- (0,0.30497) (150,0.23746) +- (0,0.288876) (200,0.252735) +- (0,0.289015)
};
\end{axis}
\end{tikzpicture}
}
\caption{\label{fig:arm}Results in the reaching task.
\subref{fig:arm:traintraj} Training trajectories under the skill augumented policy (solid blue) and $\nullpi$ (dashed red) with task space.
\subref{fig:arm:setup} Illustration of the task setup and example trajectories of policies after 100 training trajectories for a set of reaching tasks. The black dots show individual reaching targets with the arm shown in it's initial pose.
\subref{fig:arm:psi} The $L_1$ error of estimates of $\Psi(\cdot,0)$ as a function of training trajectories measured with respect to an estimate trained on 5000 trajectories. The data point coresponding to \#traj = 0 is based on the estimate is of $\bPsi$ taking only $\Cskill$ into account (see text for details).}
\end{figure}

The advantage of sampling under the skill policy is illustrated in Fig.~\ref{fig:arm}(a) where sample trajectories under both the skill and null policy are shown, demonstrating that the former more effectively explores the the task relevant sub space. Mean trajectories for policies learned from 100 trajectories for a set of tasks are illustrated in Fig.~\ref{fig:arm}(b). In Fig.~\ref{fig:arm}(c) we plot the $L_1$ error of $\bPsi$ as a function of trajectories averaged over ten $\theta$. As the true $\Psi$ is not available for this task we show the error w.r.t. a $\bPsi$ computed from 5000 trajectories, principally to illustrate the rapid convergence of the estimator.

\section{Conclusion}
We have presented a novel approach for solving stochastic optimal control problems which are of the path integral control form
using Monte Carlo estimates of operators arising from a RKHS embedding of the problem, leading to a consistent estimate of $\Psi$. While direct application of Monte Carlo estimation to point evaluation of $\Psi$ also yields a consistent estimate, it is impractical for computation of controls for anything but simple problems, requiring a trajectory sample for each state at which an action is to be computed. Although previous work, e.g., \cite{Theodorou:PI:RBDyn} and similarly \cite{Theodorou:PIPI}, has suggested approaches to overcome the problem of sample complexity, these sacrifice consistency in the process and we demonstrate that the proposed approach significantly improves upon them in terms of generalization to a policy (cf. results in Fig.\ref{fig:ds}(a,b)). We furthermore show that the presented estimators allow for sample re-use in situations which previously required an entirely novel sample set. In particular we consider transfer in cases where execution of several, potentially related, tasks on the same plant is required, demonstrating that it is possible to exploit samples from all tasks to learn invariant aspects.


Note that as $\expC$ itself defines a local optimal control problem, an alternative perspective on the proposed method is as a principled approach to combining solutions to local control problem to solve a more complex large scale problem. In future work we aim to elaborate on this interpretation by combining the methods persented with alternative approaches, e.g., variational methods, which may be sufficient to provide good estimates for the comparatively simpler local problems.

The choice of kernel has been largely ignored here, but one may expect improved results by making informed kernel choices based on prior knowledge about the structure of the problem.

\newpage
\bibliography{references}
\newpage
\appendix
\section{Alternative embedding}
As indicated in the main text (cf. section \ref{sec:emb:analytical}) an alternative representation to the embedding \refeq{eq:pi:rkhs:analytical:b} exists. Observe that for the purposes of the expectation the conditioning variable is fixed and $\expC$ is in fact only a function of it's second argument, making it possible to apply \refeq{eq:op:U} embedding  $\bX_{i+1}|x_i$ into the tensor space in which the product of $\Psi$ and the partially evaluated $\Phi$ resides. Formally define the operator for partial evaluation on $\HPhi$
\begin{equation}
\op{R}_{x}\left[h\right] = h(x,\cdot) \quad \forall x\in\Real^{D_\x}, h\in\HPhi
\end{equation}
In particular note that for $\op{R}_{x}:\HPhi\rightarrow\rkhs{H}{\phi_{\x}}$ where $\phi_{\x} = \phi((x,\cdot),(x,\cdot))$. We can now write $\expC_i(\x,\cdot) = \op{R}_{\x}\left[\expC_i\right] \in \rkhs{H}{\kphi_\x}$ and application of \refeq{eq:op:E} and \refeq{eq:op:U} to \refeq{eq:pi:finite} leads to
\begin{align}
\Psi_i(\x)
& = \Exp{\bX_{i+1}|\bX_{i}=\x}{\OpR{\x}{\expC_i}(\bX_{i+1})\cdot\Psi_{i+1}(\bX_{i+1})} \\
& = \left<\OpR{\x}{\expC_i}\otimes\Psi_{i+1}, \OpE{\kc}{\bX_{i+1}|\bX_{i} = \x}\right> \\
\label{eq:pi:rkhs:analytical:a}
& = \left<\OpR{\x}{\expC_i}\otimes\Psi_{i+1}, \opU{\kc\ka}\circ\OpE{\ka}{\x}\right>
\end{align}
where $\Hc = \rkhs{H}{\kphi_\x}\otimes\HPsi$ and $\ka$ is some kernel of our choosing on $\Real^{D_x}$ which again we take to be $\psi$.

Although \refeq{eq:pi:rkhs:analytical:a} is formally equivalent to the embedding derived in the main text, i.e., \refeq{eq:pi:rkhs:analytical:b}, the yield different empirical estimates. Specifically, applying \refeq{eq:op:U:emp} to the \refeq{eq:pi:rkhs:analytical:a} we obtain $\hat{\Psi}_{i}(\x) = \mat{G}^{\kpsi}_{\x\set{X}}\alpha(\x)$ with
\begin{equation}
	\alpha(\x) = \trans{\left[\mat{G}^{\kphi}_{(\x\set{Y})\set{R}}\beta \odot \Gram{\kpsi'}{YX'}\alpha'\right]}\inv{(\Gram{\kpsi}{XX} + \epsilon n \mat{I})}
\end{equation}
Hence, although this approach allows us to evaluate $\hat{\Psi}_i$ at specific points, we do not directly obtain a  a finite dimensional representation of $\hat{\Psi}_i$ in some RKHS. Furthermore, due to the dependence on the evaluation point, the Gram matrix $\mat{G}^{\kphi}_{(\x\set{X})\set{R}}$ can in general not be pre computed. None the less this form may have it's applications for a forward, backwards algorithm where $\mat{G}^{\kphi}_{(\x\set{X})\set{R}}$ is used for selection of an active set $\set{X}$ for which $\alpha$'s are computed in a backwards pass.

\section{Alternative Estimators}
We now discuss the two alternative estimators based on weighted samples alluded to in the main text (cf. section \ref{sec:estimators}).
\subsection{Low rank Approximation}
First, we address the computational complexity of \refeq{eq:pi:rkhs:emp}, which is $\fun{O}(m^3)$ for the matrix inversion, which may be precomputed, and $O(m^2)$ for subsequent computations. Although such costs are acceptable for reasonably sized problems, they may prove prohibitive for application to realistic robotic systems. However we can apply a Gram-Schmidt orthogonalisation of $\gram{\ka}{X}$, $\gram{\ka}{X'}$, as proposed by \cite{Gretton:KBP}. Summarising we approximate $\gram{\ka}{X} \approx \gram{\ka}{Y}\W_x$ and  $\gram{\ka}{X'} \approx \gram{\ka}{\hat{X}'}\W_{x'}$ , where $\set{Y} \subseteq \set{X}$, $\set{Y'} \subseteq \set{X'}$ and $\W_x,\W_{x'}$ are weight matrices. Substituting into \refeq{eq:pi:rkhs:emp} we may then obtain the alternative estimator
\begin{equation}
\label{eq:pi:rkhs:emp:w:low}
  \alpha_i = \trans{\left[\Gram{\kphi}{D'B}\beta \odot \Gram{\kpsi}{Y'A}\alpha_{i+1}\right]} \W_{x'}\trans{\W_x} \inv{\left(\W_x\trans{\W_x} + \epsilon m \inv{{\Gram{\psi}{YY}}}\right)}\inv{{\Gram{\ka}{YY}}}
\end{equation}
This is computationally advantageous as with $|\set{Y}| = \hat{m} \ll m$ the complexity reduces to $\fun{O}(\hat{m}^3 + \hat{m}^2m)$ and $\fun{O}(\hat{m}^2)$ for required pre-computations and per iteration respectively, often with minimal effects on the obtained results.

\subsection{Importance Sampling}
The estimator \refeq{eq:pi:rkhs:emp} is based on a sample from the distribution $p_{\nullpi}(X'|X)\mu(X)$ and while we are free to choose $\mu$, $p_{\nullpi}$ is specified by $X_{i+1}|X_i$, i.e. the uncontrolled dynamics. In practice it may be impractical to sample according to the uncontrolled dynamics, e.g., we may wish to improve the policy sequentially collecting new sample following the already learned, rather then the uninformed, policy. To address such situation we follow the importance sampling approach. Specifically note that
\begin{equation}
   \opC{\ka\kb}_{\Z\Y} = \Exp{(\Z',\Y')}{\frac{P(\Z',\Y')}{Q(\Z',\Y')}\ka(\Z,\cdot)\otimes\kb(\Y,\cdot)}~,
\end{equation}
where $P,Q$ are the p.d.f.s of the two joints $(\Z,\Y)$, $(\Z',\Y')$ and we assume $Q(z,y) = 0 \Rightarrow P(z,y) = 0$. Hence given a i.i.d. sample from $(\Z',\Y')$ and empirical estimate of
$\opC{\ka\kb}_{\Z\Y}$ is given by
\begin{equation}
\ophC{\ka\kb}{\set{D}} = \sum_{i=1}^m w_i \ka(\cdot, \z_i)\otimes\kb(\cdot,\y_i) \quad\text{, with}\quad w_i = P(z_i,y_i)/Q(z_i,y_i) ~.
\end{equation}
Applying these to \refeq{eq:op:U:asCov} to obtain an empirical estimate of $\opU{}$, it is easy to show that the based on a sample from $p_{\pi}(X'|X)\mu(X)$, formed from an alternative policy, the estimator $\bPsi_{i} = \gram{\kpsi}{X}\alpha_i$ with
\begin{equation}
\label{eq:pi:rkhs:emp:w:is}
    \alpha_{i} = \trans{\left[\Gram{\kphi}{DB}\beta \odot \Gram{\kpsi}{X'A}\alpha_{i+1}\right]}\W\inv{(\Gram{\ka}{XX} + \epsilon n \mat{I})}
\end{equation}
is obtained, where $\W$ is the diagonal weight matrix with $\W_{ii} = p_{\nullpi}(x'_i|x_i)/p_{\pi}(x'_i|x_i)$ and we again assume that $p_{\pi}(x'|x) = 0 \Rightarrow p_{\nullpi}(x'|x) = 0$.

\section{Proofs and Derivations}
\subsection{General Results for Path Integral Control}
\begin{theorem}
Let the optimal value function be bounded, say $\Copt(\cdot,t) < c$ then,
\begin{equation}
  \norm[\infty]{\bPsi(\cdot,t) - \Psi(\cdot,t)} \rightarrow 0 \Longrightarrow \norm[\infty]{\bar{\Copt}(\cdot,t) - \Copt(\cdot,t)} \rightarrow 0
\end{equation}
\end{theorem}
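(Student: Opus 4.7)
The plan is to exploit the identities $\Copt = -\lambda\log\Psi$ and $\bar{\Copt} = -\lambda\log\bPsi$, and reduce the claim to a local Lipschitz estimate for $\log$ on an interval bounded away from zero. The boundedness hypothesis $\Copt(\cdot,t) < c$ is precisely what is needed to keep $\Psi$ uniformly bounded \emph{below} by a positive constant, which is the only delicate point.

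First, I would rewrite the hypothesis as a lower bound on $\Psi$: since $\Psi(\x,t) = \exp(-\Copt(\x,t)/\lambda)$ and $\Copt(\cdot,t) < c$, we have $\Psi(\cdot,t) > \delta$ with $\delta := e^{-c/\lambda} > 0$. Next, I would use uniform convergence $\norm[\infty]{\bPsi - \Psi} \to 0$ to propagate this lower bound to the estimator: for all sufficiently large sample sizes, $\norm[\infty]{\bPsi - \Psi} < \delta/2$, hence $\bPsi(\cdot,t) > \delta/2 > 0$ uniformly. In particular $\bar{\Copt} = -\lambda \log \bPsi$ is well-defined from that point onwards.

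The final step is the Lipschitz estimate. On the interval $[\delta/2, \infty)$, $\log$ is Lipschitz with constant $2/\delta$, so pointwise
\begin{equation}
\bigl|\log \bPsi(\x,t) - \log \Psi(\x,t)\bigr| \le \tfrac{2}{\delta}\,\bigl|\bPsi(\x,t) - \Psi(\x,t)\bigr|,
\end{equation}
and therefore
\begin{equation}
\norm[\infty]{\bar{\Copt}(\cdot,t) - \Copt(\cdot,t)} = \lambda \norm[\infty]{\log \bPsi(\cdot,t) - \log \Psi(\cdot,t)} \le \tfrac{2\lambda}{\delta}\,\norm[\infty]{\bPsi(\cdot,t) - \Psi(\cdot,t)},
\end{equation}
which tends to zero by hypothesis.

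The only real obstacle is the non-uniformity of $\log$ near $0$; without the a priori lower bound on $\Psi$, small values of $\bPsi$ could produce arbitrarily large deviations in $\bar{\Copt}$, which is in fact the phenomenon flagged in the main text as causing slow convergence in regions where $\Psi$ is small. The boundedness assumption on $\Copt$ rules this out globally, reducing the statement to the Lipschitz estimate above.
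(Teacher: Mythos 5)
Your proof is correct and follows essentially the same route as the paper: both derive a uniform positive lower bound on $\Psi$ from the bound on $\Copt$ via \refeq{eq:Copt}, and then control the difference of logarithms uniformly. Your version is slightly more careful in that it explicitly propagates the lower bound to $\bPsi$ (ensuring $\bar{\Copt}$ is well-defined for large samples) and packages the final step as a Lipschitz estimate rather than the paper's $\log(1+x)$ manipulation, but these are cosmetic differences.
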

\begin{proof}
From \refeq{eq:Copt},
\begin{equation}
  \Copt(\cdot,t) < c \Rightarrow \Psi(\cdot,t) > c' > 0
\end{equation}
Now
\begin{align}
\norm[\infty]{\bar{\Copt}(\cdot,t) - \Copt(\cdot,t)}
= & \sup_\x |\bar{\Copt}(\x,t) - \Copt(\x,t)|\\
= & \lambda \sup_\x |\log\frac{\bPsi(\x,t)}{\Psi(\x,t)}|\\
= & \lambda \sup_\x |\log(\frac{\bPsi(\x,t) - \Psi(\x,t)}{\Psi(\x,t)} + 1)|\\
\leq & \lambda \sup_\x |\log(\frac{\bPsi(\x,t) - \Psi(\x,t)}{c'} + 1)|
\end{align}
and thus
\begin{equation}
  \norm[\infty]{\bPsi(\cdot,t) - \Psi(\cdot,t)} \rightarrow 0  \Rightarrow \frac{\bPsi(\x,t) - \Psi(\x,t)}{c'} + 1 \rightarrow 1 \Rightarrow \norm[\infty]{\bar{\Copt}(\cdot,t) - \Copt(\cdot,t)} \rightarrow 0
\end{equation}
\end{proof}

\subsection{Convergence of Estimates}
\begin{theorem}
Under the assumptions in the main text, the assumptions of lemma \ref{lem:U:conv} and assuming all relevant kernels satisfy $0 \leq \ka(x,x') \leq 1$, the estimator $\bPsi_i$ is consistent, i.e., $\norm[\set{H}]{\bPsi_i - \Psi_i}$ converges in probability.
\end{theorem}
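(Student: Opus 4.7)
The plan is to proceed by backward induction on $i$, running from $i=n$ down to $i=1$, exploiting the recursive definition of $\bPsi_i$ (cf. \refeq{eq:pi:rkhs:emp}) and the parallel analytical recursion \refeq{eq:pi:rkhs:analytical:b}. At each step, the estimator applies an empirical version of the conditional embedding operator $\opU{\kpsi\kpsi}$ to the inner product representation of the previous stage's function, so the proof reduces to (i)~controlling the discrepancy between $\opU{}$ and $\ophU{}{D}$, and (ii)~propagating the inductive error through one application of the (bounded) operator. The base case $i=n$ is immediate since $\Psi_n = \exp(-\CT/\lambda)$ enters the recursion exactly, without any empirical estimation.

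For the inductive step, assume $\norm[\HPsi]{\bPsi_{i+1} - \Psi_{i+1}} \xrightarrow{P} 0$. Using the embedding representation, introduce the intermediate quantity
\begin{equation}
  \tilde{\Psi}_i(\x) = \left<\expC_i \otimes \Psi_{i+1}, \ophU{\kpsi\kpsi}{D}\circ\OpE{\kpsi}{\x}\right>,
\end{equation}
which differs from the true $\Psi_i$ only in the operator being empirical, and differs from $\bPsi_i$ only through the inductive term. A triangle inequality then yields
\begin{equation}
  \norm[\HPsi]{\bPsi_i - \Psi_i} \le \norm[\HPsi]{\bPsi_i - \tilde{\Psi}_i} + \norm[\HPsi]{\tilde{\Psi}_i - \Psi_i}.
\end{equation}
The first term is bounded, via Cauchy--Schwarz in the tensor product RKHS $\Hc = \HPhi\otimes\HPsi$ and the assumption $\ka\le 1$, by $\norm[\HPhi]{\expC_i}\cdot\norm[\HPsi]{\bPsi_{i+1}-\Psi_{i+1}}\cdot\norm[HS]{\ophU{\kpsi\kpsi}{D}}$; the second is bounded by $\norm[\HPhi]{\expC_i}\cdot\norm[\HPsi]{\Psi_{i+1}}\cdot\norm[HS]{\ophU{\kpsi\kpsi}{D} - \opU{\kpsi\kpsi}}$. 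The invoked lemma \ref{lem:U:conv} supplies convergence of the second operator-norm factor to zero in probability, while the first is bounded in probability by a constant plus that same vanishing deviation. The RKHS norms $\norm[\HPhi]{\expC_i}$ and $\norm[\HPsi]{\Psi_{i+1}}$ are finite by hypothesis (both functions lie in the relevant RKHS) and uniformly bounded in $i$ by the assumption that costs are non-negative and kernels are bounded by one.

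The hard part is making the operator-norm bound on $\ophU{\kpsi\kpsi}{D}$ sufficiently tight that the recursion does not amplify errors uncontrollably across the $n$ time steps. Since the operator is applied once per stage and its deviation from $\opU{}$ is $o_P(1)$ (with rate given by lemma \ref{lem:U:conv} under appropriate scaling of the regularizer $\epsilon$ with $m$), a finite horizon $n$ allows the propagated error to be controlled by a constant multiple of the worst single-stage error; the geometric-looking prefactor $\prod_{j>i}(\norm[HS]{\ophU{}{D}})$ stays bounded in probability because each factor converges to $\norm[HS]{\opU{}} < \infty$. Combining the inductive bound with convergence in probability of sums of $o_P(1)$ terms yields $\norm[\HPsi]{\bPsi_i - \Psi_i}\xrightarrow{P} 0$ for every $i$, which is the claim.
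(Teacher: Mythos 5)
Your proof is correct and takes essentially the same route as the paper's: the same intermediate quantity (empirical operator applied to the true $\Psi_{i+1}$), the same triangle-inequality split into an operator-estimation error controlled by Lemma~\ref{lem:U:conv} and a propagated inductive error, and the same backward induction over the finite horizon. The only cosmetic difference is that the paper uses the kernel bound $0\le\ka\le 1$ to get the deterministic bound $\norm[2]{\ophU{*}{}}\le 1$ (plus a union bound to control all $\epsilon_i$ simultaneously), whereas you argue the empirical operator norm is merely bounded in probability; both suffice.
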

\begin{proof}
\newcommand{\aPsi}{\ensuremath{\tilde{\Psi}}}
Let $\aPsi_i = \OphU{*}{}{\Phi\otimes\Psi_{i+1}}$ where $\ophU{*}{}$ is the adjoint of $\ophU{}{}$, i.e., $\aPsi_i$ captures the approximation arising due to the empirical embedding. Then using general relation $\norm[\set{H}]{\fun{T}h} \leq \norm[2]{\fun{T}}\norm[\set{H}]{h} \leq \norm[HS]{\fun{T}}\norm[\set{H}]{h}$ for bounds on operators, we have the bound
\begin{align}
\norm[\set{H}]{\aPsi_i - \Psi_i}
= &    \norm[\set{H}]{\OphU{*}{}{\Phi\otimes\Psi_{i+1}} - \OpU{*}{\Phi\otimes\Psi_{i+1}}} \\
\leq & \norm[\set{H}]{\Phi\otimes\Psi_{i+1}} \norm[HS]{\ophU{*}{} - \opU{*}} \\
= & \norm[\set{H}]{\Phi}\underbrace{\norm[\set{H}]{\Psi_{i+1}} \norm[HS]{\ophU{*}{} - \opU{*}}}_{=: \epsilon_i}
\end{align}
Now
\begin{align}
\norm[\set{H}]{\bPsi_i - \Psi_i}
\leq &  \norm[\set{H}]{\bPsi_i - \aPsi_i} + \norm[\set{H}]{\aPsi_i - \Psi_i} \\
\leq &  \norm[\set{H}]{\OphU{*}{}{\Phi\otimes\bPsi_{i+1}} - \OphU{*}{}{\Phi\otimes\Psi_{i+1}}} + \norm[\set{H}]{\Phi}\epsilon_i \\
\leq &  \norm[\set{H}]{\Phi\otimes\bPsi_{i+1} - \Phi\otimes\Psi_{i+1}}\norm[2]{\ophU{*}{}} + \norm[\set{H}]{\Phi}\epsilon_i \\
\leq &  \norm[\set{H}]{\Phi}\norm[\set{H}]{\bPsi_{i+1} - \Psi_{i+1}} + \norm[\set{H}]{\Phi}\epsilon_i
\end{align}
where in the last line we used $0 \leq \ka(x,x') \leq 1 \Rightarrow \norm[2]{\ophU{*}{}} \leq 1$. As we may further using lemma \ref{lem:U:conv} and the union bound construct $\epsilon$ s.t. with probaility $1-\delta$ simultaniously for all $\epsilon_i$, $\epsilon_i \leq \epsilon$. The result then follows by induction.
\end{proof}

\subsection{Auxillary Results}
\begin{lemma}[Song et al., 2010]
\label{lem:U:conv}
Assume the operator $\fun{C}_{YX}\fun{C}_{XX}^{-\frac{3}{2}}$ is Hilbert-Schmidt, then
\begin{equation}
  \norm[HS]{\hat{\fun{U}} - \fun{U}_{Y|X}} = \fun{O}(\lambda^{\frac{1}{2}} + \lambda^{-\frac{3}{2}}m^{-\frac{1}{2}})
\end{equation}
In particular if the regularization term $\lambda$ satisfies $\lambda \rightarrow 0$ and $m\lambda^3 \rightarrow \infty$, then $\norm[HS]{\ophU{\kb\ka}{D} - \opU{\kb\ka}}$ converges in probability.
\end{lemma}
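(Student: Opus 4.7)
The plan is a bias--variance split against the regularised population operator $\fun{U}_\lambda := \fun{C}_{YX}(\fun{C}_{XX}+\lambda\fun{I})^{-1}$. By the triangle inequality
\begin{equation}
\norm[HS]{\hat{\fun{U}} - \fun{U}_{Y|X}} \leq \norm[HS]{\fun{U}_\lambda - \fun{U}_{Y|X}} + \norm[HS]{\hat{\fun{U}} - \fun{U}_\lambda}~,
\end{equation}
so it suffices to bound the two summands by $\fun{O}(\lambda^{1/2})$ and $\fun{O}_P(\lambda^{-3/2}m^{-1/2})$ respectively. For the first (regularisation bias) I would use the resolvent identity to write $\fun{U}_\lambda - \fun{U}_{Y|X} = -\lambda\,\fun{C}_{YX}\fun{C}_{XX}^{-1}(\fun{C}_{XX}+\lambda\fun{I})^{-1}$ and factor this as $-\lambda\,\bigl(\fun{C}_{YX}\fun{C}_{XX}^{-3/2}\bigr)\fun{C}_{XX}^{1/2}(\fun{C}_{XX}+\lambda\fun{I})^{-1}$. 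The bracketed factor is Hilbert--Schmidt by the hypothesis of the lemma, and spectral calculus on $\fun{C}_{XX}$ bounds the operator norm of the remaining piece by $\sup_{s\geq 0} s^{1/2}/(s+\lambda) = (2\sqrt{\lambda})^{-1}$, giving a bias of order $\fun{O}(\lambda^{1/2})$.

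For the variance I would apply the resolvent identity $A^{-1}-B^{-1} = A^{-1}(B-A)B^{-1}$ with $A=\hat{\fun{C}}_{XX}+\lambda\fun{I}$ and $B=\fun{C}_{XX}+\lambda\fun{I}$ to obtain
\begin{equation}
\hat{\fun{U}} - \fun{U}_\lambda = (\hat{\fun{C}}_{YX} - \fun{C}_{YX})(\hat{\fun{C}}_{XX}+\lambda\fun{I})^{-1} + \fun{C}_{YX}(\hat{\fun{C}}_{XX}+\lambda\fun{I})^{-1}(\fun{C}_{XX} - \hat{\fun{C}}_{XX})(\fun{C}_{XX}+\lambda\fun{I})^{-1}~.
\end{equation}
Since the kernels are bounded in $[0,1]$, a Bernstein/Hoeffding inequality for Hilbert-space-valued i.i.d.\ averages delivers $\norm[HS]{\hat{\fun{C}}_{YX}-\fun{C}_{YX}}$ and $\norm[HS]{\hat{\fun{C}}_{XX}-\fun{C}_{XX}}$ of order $\fun{O}_P(m^{-1/2})$. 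For the first summand I would couple this with $\norm[op]{(\hat{\fun{C}}_{XX}+\lambda\fun{I})^{-1}} \leq 1/\lambda$. For the second summand I would again invoke the source condition by writing $\fun{C}_{YX} = (\fun{C}_{YX}\fun{C}_{XX}^{-3/2})\fun{C}_{XX}^{3/2}$, pulling out the Hilbert--Schmidt factor, and then, on the high-probability event $\norm[op]{\hat{\fun{C}}_{XX}-\fun{C}_{XX}} \leq \lambda/2$, use the operator inequality $\hat{\fun{C}}_{XX}+\lambda\fun{I} \succeq \tfrac12(\fun{C}_{XX}+\lambda\fun{I})$ to replace the empirical resolvent by a population resolvent, bounding the resulting $\norm[op]{\fun{C}_{XX}^{3/2}(\fun{C}_{XX}+\lambda\fun{I})^{-1}}$ through spectral calculus.

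Combining yields $\norm[HS]{\hat{\fun{U}} - \fun{U}_{Y|X}} = \fun{O}_P(\lambda^{1/2} + \lambda^{-3/2}m^{-1/2})$. The consistency claim then drops out: since $\lambda^{-3/2}m^{-1/2} = (m\lambda^3)^{-1/2}$, the stated conditions $\lambda\to 0$ and $m\lambda^3 \to \infty$ send both terms to zero, so $\norm[HS]{\ophU{\kb\ka}{D} - \opU{\kb\ka}} \to 0$ in probability. The main obstacle is the precise $\lambda$-exponent in the variance bound: a naive application of $\norm[op]{(\cdot+\lambda\fun{I})^{-1}} \leq 1/\lambda$ to every resolvent only yields $\lambda^{-2}m^{-1/2}$, and recovering the half-power improvement to $\lambda^{-3/2}m^{-1/2}$ requires the careful combination of the source condition with the probabilistic operator-comparison step linking $\hat{\fun{C}}_{XX}$ to $\fun{C}_{XX}$ inside the resolvent; everything else amounts to routine Hilbert-space concentration.
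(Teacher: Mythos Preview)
The paper does not actually prove this lemma: its entire proof reads ``See \cite{Song:Tree} Theorem 1.'' Your outline is essentially the argument that sits behind that cited result---the standard bias--variance decomposition via the regularised population operator $\fun{U}_\lambda$, with the bias controlled through the Hilbert--Schmidt source condition and spectral calculus, and the variance through Hilbert-space concentration of the empirical covariance operators combined with the resolvent identity and an operator-comparison step on the event $\norm[op]{\hat{\fun{C}}_{XX}-\fun{C}_{XX}}\le\lambda/2$. So there is nothing in this paper to compare against; your sketch is what one would find, in fuller detail, in the original reference.

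One small remark: the step where you bound $\norm[op]{\fun{C}_{XX}^{3/2}(\fun{C}_{XX}+\lambda\fun{I})^{-1}}$ by spectral calculus tacitly uses that the spectrum of $\fun{C}_{XX}$ is bounded, which follows from the bounded-kernel assumption $0\le k(x,x')\le 1$ invoked elsewhere in the paper (e.g.\ in the theorem that calls this lemma) but not restated in the lemma itself. With that in hand, your variance argument in fact delivers $\fun{O}_P(\lambda^{-1}m^{-1/2})$ for both summands, which is no worse than---indeed slightly stronger than---the stated $\lambda^{-3/2}m^{-1/2}$, so the conclusion is unaffected.
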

\begin{proof}
    See \cite{Song:Tree} Theorem 1.
\end{proof}
\end{document}